\documentclass{article}

\PassOptionsToPackage{numbers,compress}{natbib}
\usepackage[preprint]{neurips_2023}
\usepackage[utf8]{inputenc}
\usepackage[T1]{fontenc}
\usepackage{url}
\usepackage{amsfonts}
\usepackage{microtype}
\usepackage{xcolor}

\usepackage{eccvabbrv}

\usepackage{graphicx}
\usepackage{booktabs}

\usepackage{wrapfig}
\usepackage{caption}

\usepackage[accsupp]{axessibility}  

\usepackage{hyperref}

\usepackage{orcidlink}

\usepackage{booktabs}
\usepackage{multirow}
\usepackage{tabularx}
\usepackage{placeins}
\usepackage{amsthm}
\usepackage{siunitx}
\newtheorem{theorem}{Theorem}

\begin{document}

\title{CausalVAE as a Plug-in for World Models: Towards Reliable Counterfactual Dynamics}

\author{
  Ziyi Ding \\
  Tsinghua Shenzhen International Graduate School, Tsinghua University \\
  \texttt{dingzy25@mails.tsinghua.edu.cn}
  \And
  Xianxin Lai \\
  The University of Hong Kong \\
  \texttt{xianxin.lai@connect.hku.hk}
  \And
  Weiyu Chen \\
  The University of Hong Kong \\
  \texttt{weiyuc@connect.hku.hk}
  \And
  Xiao-Ping Zhang \\
  Tsinghua Shenzhen International Graduate School, Tsinghua University \\
  \texttt{xpzhang@ieee.org}
  \And
  Jiayu Chen\thanks{Corresponding author.} \\
  The University of Hong Kong \\
  INFIFORCE Intelligent Technology \\
  \texttt{jiayuc@hku.hk}
}

\maketitle

\begin{abstract}
  In this work, CausalVAE is introduced as a plug-in structural module for latent world models and is attached to diverse encoder-transition backbones. Across the reported benchmarks, competitive factual prediction is preserved and intervention-aware counterfactual retrieval is improved after the plug-in is added, suggesting stronger robustness under distribution shift and interventions. The largest gains are observed on the Physics benchmark: when averaged over 8 paired baselines, CF-H@1 is improved by \textbf{+102.5\%}. In a representative GNN-NLL setting on Physics, CF-H@1 is increased from \textbf{11.0} to \textbf{41.0} (\textbf{+272.7\%}). Through causal analysis, learned structural dependencies are shown to recover meaningful first-order physical interaction trends, supporting the interpretability of the learned latent causal structure.
\end{abstract}

\section{Introduction}
Generalization under distribution shifts, interventions, and mechanism changes remains a central challenge for visual model-based learning \cite{schoelkopf2021towards}.
World models achieve strong predictive performance by compressing observations into latent states and rolling them forward under actions \cite{ha2018worldmodels,hafner2019dreamer,hafner2023dreamerv3}, but predictive latents are often entangled and weakly aligned with underlying causal factors, limiting out-of-distribution robustness.

Recent work on causal representation learning argues that discovering high-level causal variables from low-level sensory inputs is essential for robust generalization and transfer \cite{schoelkopf2021towards}.
In parallel, object-centric representation learning provides a compositional inductive bias by decomposing scenes into sets of object slots, improving systematic generalization \cite{locatello2020slotattention,greff2019iodine,burgess2019monet}.
However, neither standard world models nor generic object-centric models explicitly enforce that latent factors obey a directed acyclic graph (DAG) causal structure. This limitation is critical: without an explicit structural causal model, latent interventions are not identifiable, so counterfactual rollouts can deviate from physically valid alternative trajectories.

To address this, we integrate a structured causal disentanglement module into a world-model pipeline.
Our structural branch adopts the CausalVAE causal layer and mask mechanism to map independent exogenous factors into causally related endogenous variables, while learning a DAG over the latent factors \cite{yang2020causalvae}.
We enforce acyclicity via a differentiable DAG constraint inspired by continuous optimization approaches for structure learning \cite{zhengDAGsNOTEARS2018}.
Because our setting is sequential and action-conditioned, existing static causal representation methods (e.g., CausalVAE in i.i.d. settings) cannot be directly applied for stable multi-step dynamics. Therefore, we introduce a staged training strategy that first learns predictive dynamics and then progressively activates structural regularization, while using alignment-only weak supervision to anchor latent coordinates (instead of the auxiliary-variable conditional prior used in identifiable VAE analyses \cite{pmlr-v108-khemakhem20a}).
This design yields an interpretable causal state space that supports intervention-style reasoning and improves robustness in model-based prediction.

\textbf{Our contributions are}:
(i) a plug-in causal structural module for latent world models, which can be combined with diverse representative world-modeling baselines used in our experiments;
(ii) a latent world-model formulation that explicitly captures causal relationships among latent elements, rather than treating latent dimensions as purely predictive features;
(iii) a staged optimization scheme for stable sequential training, together with an alignment-anchored identifiability analysis that characterizes what becomes identifiable under this training setup, and empirical gains in generalization and counterfactual consistency over non-causal world-modelling baselines.

\section{Related Work}

\subsection{Structured World Models and Latent Dynamics.}
World models learn compact latent states to support multi-step prediction 
and visual planning \cite{ha2018worldmodels,hafner2020dreamerv2}.
 To improve compositionality in physical domains, structured variants 
 introduce object- or relation-centric inductive biases, 
 utilizing graph networks (e.g., NRI, GNS) 
 \cite{kipf2018nri,sanchezgonzalez2020gns}
  or object-centric visual slots (e.g., C-SWM, Slot Attention) 
  \cite{kipf2020cswm,locatello2020slotattention}. While these models,
   including standard GNN-based dynamics, achieve strong factual 
   rollout accuracy, their latent representations are optimized
    purely as predictive features. Without learning an explicit 
    Structural Causal Model (SCM) \cite{pearl2009causality}, the model does not capture the
    causal relationships among variables. As a result, it may perform
    well on factual prediction, but can break down under interventions
    or counterfactual tasks.

\subsection{Causal Representations and Counterfactual Dynamics.}
Causal representation learning grounds latents in SCMs to enable interpretable interventions. Methods such as CausalVAE \cite{yang2020causalvae} are mainly developed for static i.i.d.\ observations, where supervision and identifiability assumptions do not directly match action-conditioned sequential rollouts. In multi-step visual dynamics, this mismatch can manifest as compounding one-step errors and latent drift under temporal distribution shift. Our method therefore does not directly apply a static causal model: we introduce staged optimization and alignment-anchored supervision to stabilize sequential training while preserving a structured causal latent space. Conversely, recent counterfactual dynamics models (e.g., CWM, Causal-JEPA) \cite{li2020causalworldmodels,venkatesh2024cwm,nam2026causaljepa} emphasize learning and evaluation under counterfactual/interventional settings in visual environments, but many approaches rely on masking/prompting heuristics instead of explicitly integrating an intervention-ready causal factorization into the transition dynamics.

\subsection{Causal evaluation in dynamical systems.}
Prior work stresses that predictive accuracy is insufficient to evaluate causal correctness from pixels, requiring rigorous intervention-centric metrics \cite{ke2021systematic}. Counterfactual dynamics models further argue that counterfactual capability should be treated as a first-class objective beyond factual prediction \cite{li2020causalworldmodels,venkatesh2024cwm}. Motivated by these findings, we evaluate world models using intervention-centric protocols that probe interventional faithfulness and multi-step counterfactual consistency in addition to factual forecasting.

\begin{figure}[t]
    \centering
    \includegraphics[width=0.7\linewidth]{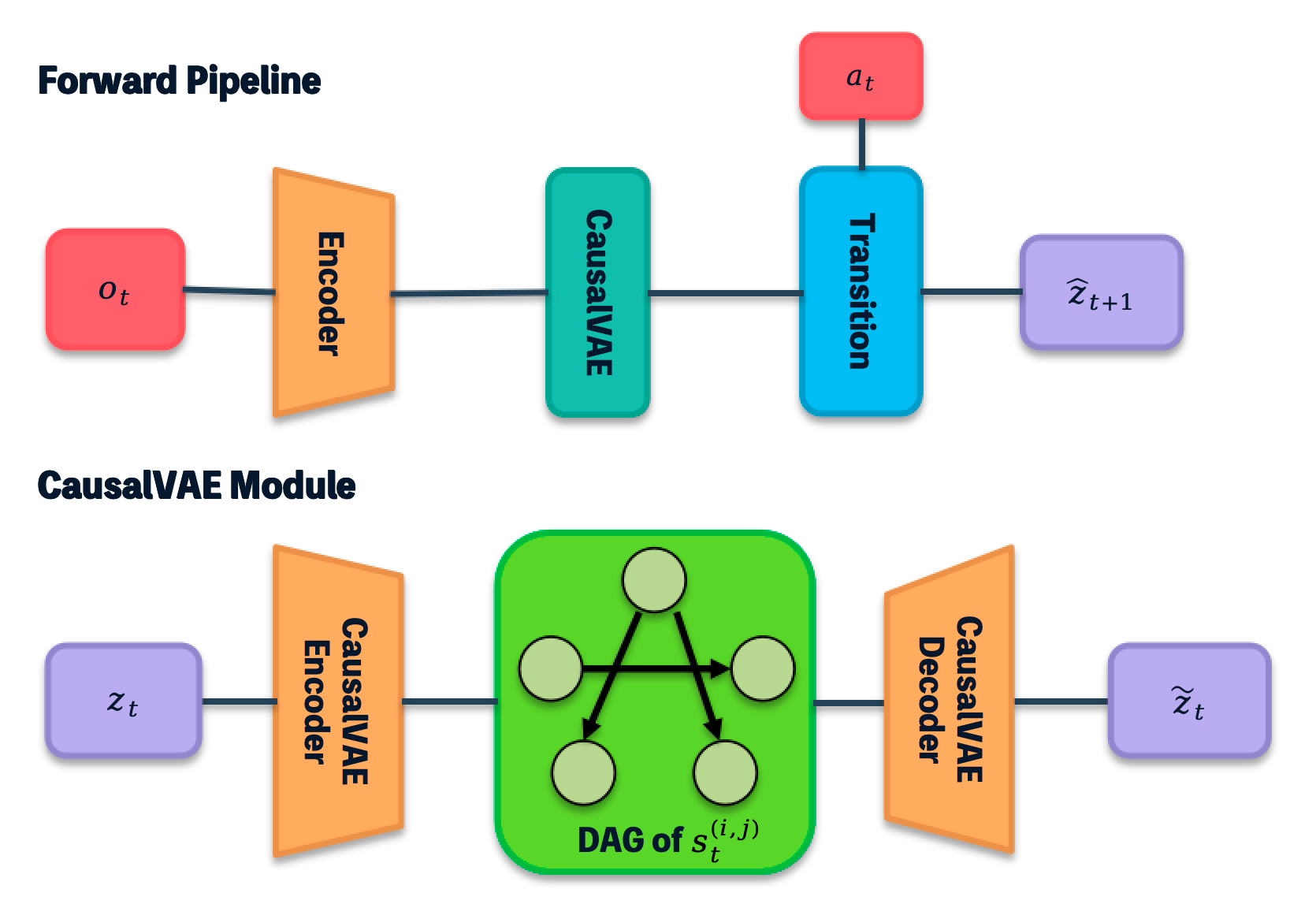}
    \caption{Overview of our framework. Top: $o_t \!\to\! z_t \!\to\! \tilde{z}_t \xrightarrow[]{a_t} \hat{z}_{t+1}$. Bottom: the CausalVAE branch imposes DAG-structured causal constraints in latent space before decoding back to $\tilde{z}_t$.}
    \label{fig:method_overview}
\end{figure}

\section{Method}
\subsection{Problem Setup}
\label{sec:problem_setup}

We study world modelling from visual observations under both factual and counterfactual settings. Following structured world-model formulations~\cite{kipf2020cswm,ke2021systematic}, we adopt an object-centric latent dynamics pipeline and augment it with a causal structural branch inspired by CausalVAE~\cite{yang2020causalvae}.

\noindent\textbf{Given.}
We are given transition tuples $\mathcal{D}=\{(o_t,a_t,o_{t+1})\}_{t=1}^{T}$, with optional simulator state $s_t\in\mathbb{R}^{d_s}$ used only for training-time alignment when available. Counterfactual queries are specified by intervention metadata $(i,j,\Delta)$ and implemented as $\mathrm{do}(s_t^{(i,j)}) := s_t^{(i,j)}+\Delta$, producing counterfactual targets such as $o_{t+1}^{cf}$ (or latent counterparts)\cite{pearl2009causality}.

\noindent\textbf{Unknowns and learning goal.}
We learn three coupled functions (with parameters $(\theta,\psi,\phi)$): encoder $E_\theta$ (observation $\rightarrow$ object-centric latent), causal branch $C_\psi$ (latent $\rightarrow$ causally regularized latent), and action-conditioned transition $F_\phi$ (latent/action $\rightarrow$ next latent). The learning goal is to obtain representations and dynamics that are accurate on factual transitions and remain reliable under interventions, i.e., strong counterfactual performance under $\mathrm{do}$-queries with causal selectivity rather than diffuse latent changes.

\noindent\textbf{Evaluation target.}
We evaluate with retrieval metrics H@1, MRR, CF-H@1, and CF-MRR (formal definitions are given in Sec.~\ref{subsec:metrics}); factual H@1/MRR are reported at 1/5/10-step horizons.

\subsection{Method Overview}
\label{sec:method_overview}

Figure~\ref{fig:method_overview} presents the overall architecture, including the main prediction pipeline and the internal structure of the CausalVAE branch.
At a high level, the model maps visual observations to object-centric latent states, refines these states with a causal structural module, and predicts future dynamics conditioned on actions.

Given an input observation $o_t$, the encoder produces an object-centric latent representation
\begin{equation}
z_t = E_\theta(o_t), \quad z_t \in \mathbb{R}^{K \times d},
\label{eq:overview_encoder}
\end{equation}
where $K$ denotes the number of object slots and $d$ is the latent dimension per slot.
This latent state is then processed by a causal branch to obtain a structurally regularized latent representation
\begin{equation}
\tilde{z}_t = C_\psi(z_t),
\label{eq:overview_causal_refine}
\end{equation}
where $C_\psi(\cdot)$ corresponds to the CausalVAE-inspired transformation shown in Fig.~\ref{fig:method_overview}.

To couple predictive accuracy with causal structure, the transition module takes action-conditioned latent input and predicts the next latent state:
\begin{equation}
\hat{z}_{t+1} = F_\phi(\tilde{z}_t, a_t).
\label{eq:overview_transition}
\end{equation}
Here $\tilde{z}_t$ denotes the latent representation fed into the transition model.  
Its concrete instantiation is specified later in the training strategy section.

Figure~\ref{fig:method_overview} contains two complementary views. The first view shows the end-to-end inference pipeline
\begin{equation}
o_t \rightarrow z_t \rightarrow \tilde{z}_t \xrightarrow[]{a_t} \hat{z}_{t+1},
\label{eq:overview_pipeline}
\end{equation}
which is the path used for factual and counterfactual prediction.
The second view expands the CausalVAE branch, where latent variables are encoded into a structured causal space and decoded back to latent dynamics space under structural regularization constraints.

This design combines two strengths from prior lines of work.
From structured world models~\cite{kipf2020cswm,ke2021systematic}, it inherits action-conditioned latent transition modeling and retrieval-oriented dynamics evaluation.
From CausalVAE-style modeling~\cite{yang2020causalvae}, it inherits structural causal regularization that encourages interpretable and intervention-sensitive representations.
As a result, the framework jointly targets factual prediction quality and counterfactual robustness within a unified latent dynamics architecture.

\subsection{CausalVAE Branch}
\label{sec:causalvae_branch}

Given the encoder latent $z_t$ (Eq.~\eqref{eq:overview_encoder}), our CausalVAE branch produces a structurally regularized latent $\tilde{z}_t$ (Eq.~\eqref{eq:overview_causal_refine}), which is then used by the transition model in Eq.~\eqref{eq:overview_transition}.
The overall forward relation is consistent with Fig.~\ref{fig:method_overview}:
\begin{equation}
o_t \rightarrow z_t \xrightarrow{\text{CausalVAE}} \tilde{z}_t \xrightarrow[]{a_t} \hat{z}_{t+1}.
\label{eq:cvae_forward_chain}
\end{equation}

The branch follows the CausalVAE principle~\cite{yang2020causalvae}: latent factors are organized through a structural causal transformation parameterized by a DAG matrix $A$, and then mapped back to latent dynamics space.
In our implementation, DAG/masking-style structural constraints are applied inside the branch to encourage causally meaningful factorization before decoding to $\tilde{z}_t$. Importantly, this causal branch is a plug-in module: it operates on latent representations and can be attached to different latent world-model backbones (e.g., C-SWM-style, GNN-style, or other encoder--transition pipelines) without changing their base transition architecture.

The training objective of this branch is
\begin{equation}
\mathcal{L}_{\mathrm{stage2}}
=
\lambda_1\mathcal{L}_{\mathrm{rec}}
+\lambda_2\mathcal{L}_{\mathrm{KL}}
+\lambda_3\mathcal{L}_{\mathrm{align}}
+\lambda_4\mathcal{L}_{\mathrm{DAG}}
+\lambda_5\mathcal{L}_{\mathrm{Mask}},
\label{eq:cvae_stage2_obj}
\end{equation}
where $\mathcal{L}_{\mathrm{rec}}$ reconstructs latent representation, $\mathcal{L}_{\mathrm{KL}}$ regularizes the variational posterior, $\mathcal{L}_{\mathrm{align}}$ aligns to available state supervision (when $s_t$ is available), and $\mathcal{L}_{\mathrm{DAG}}$ enforces acyclicity of $A$.

When simulator states are available, we impose an auxiliary alignment objective to map latent dynamics to physically meaningful state variables.
Let $g_{\mathrm{align}}(\cdot)$ denote the alignment head and $s_t$ the ground-truth state.
Given the CausalVAE output latent $\tilde{z}_t$, we define
\begin{equation}
\mathcal{L}_{\mathrm{align}}
=
\left\|
g_{\mathrm{align}}(\tilde{z}_t)-s_t
\right\|_2^2.
\label{eq:cvae_align}
\end{equation}
This term is only used during training and is weighted by $\lambda_{3}$ in Eq.~\eqref{eq:cvae_stage2_obj}.
At inference time, the model does not require $s_t$.
When $s_t$ (or concept labels) is unavailable, we set $\lambda_{3}=0$ and drop the state-conditioned terms in $\mathcal{L}_{\mathrm{KL}}$/$\mathcal{L}_{\mathrm{Mask}}$, optimizing only the unsupervised structural terms.

For the DAG constraint we use the standard smooth acyclicity penalty:
\begin{equation}
\mathcal{L}_{\mathrm{DAG}} = \mathrm{tr}\!\left(\exp(A \odot A)\right)-d,
\label{eq:cvae_dag}
\end{equation}
with $d$ the structural latent dimension~\cite{zhengDAGsNOTEARS2018}.

Following the CausalVAE formulation~\cite{yang2020causalvae}, we instantiate the remaining terms as:
\begin{align}
\mathcal{L}_{\mathrm{rec}}
&=
\left\|
\tilde{z}_t-z_t
\right\|_2^2, \quad \tilde{z}_t:=C_\psi(z_t),
\label{eq:cvae_lrec}
\\
\mathcal{L}_{\mathrm{KL}}
&=
\mathbb{E}\!\left[
\alpha_{\mathrm{KL}}\,\mathrm{KL}\!\left(q_\psi(\cdot\mid z_t)\,\|\,\mathcal N(0,I)\right)
+\sum_{i=1}^{d}\mathrm{KL}\!\left(q_\psi(\cdot\mid z_t,A)_i\,\|\,p(\cdot\mid s_t)_i\right)
\right],
\label{eq:cvae_lkl}
\\
\mathcal{L}_{\mathrm{Mask}}
&=
\mathbb{E}\!\left[
\sum_{i=1}^{d}\mathrm{KL}\!\left(q_\psi(\cdot\mid z_t,A,\mathrm{mask})_i\,\|\,p(\cdot\mid s_t)_i\right)
+\left\|g_{\mathrm{mask}}(z_t)-y_t\right\|_2^2
\right],
\label{eq:cvae_lmask}
\end{align}
where $q_\psi(\cdot\mid z_t)$ is the approximate posterior induced by the CausalVAE branch from encoder latent $z_t$, $q_\psi(\cdot\mid z_t,A)_i$ and $q_\psi(\cdot\mid z_t,A,\mathrm{mask})_i$ denote its $i$-th concept component after DAG and mask transformations, and $p(\cdot\mid s_t)_i$ is the corresponding state-conditioned prior component (when $s_t$ is available). Here $A$ is the learned DAG matrix, $d$ is the structural latent dimension (consistent with Eq.~\eqref{eq:cvae_dag}), $\mathrm{KL}(\cdot\|\cdot)$ is the Kullback--Leibler divergence, $g_{\mathrm{mask}}(\cdot)$ is the mask-branch predictor, and $y_t$ is concept-level supervision derived from available state labels. Following CausalVAE~\cite{yang2020causalvae}, we set $\alpha_{\mathrm{KL}}=0.3$ in our implementation.

\paragraph{Identifiability of the Causal Branch.}
\label{sec:identifiability}
We analyze identifiability for the Stage-2 structural learner, where the world-model backbone (encoder/transition) is frozen and the CausalVAE branch is optimized with
\begin{equation}
\label{eq:stage2_obj}
\min_{\psi,A,g_{\mathrm{align}}}\;
\mathcal{L}_{\mathrm{stage2}}
=\lambda_1\mathcal{L}_{\mathrm{rec}}
+\lambda_2\mathcal{L}_{\mathrm{KL}}
+\lambda_3\mathcal{L}_{\mathrm{align}}
+\lambda_4\mathcal{L}_{\mathrm{DAG}}
+\lambda_5\mathcal{L}_{\mathrm{Mask}}
\end{equation}
with $\tilde{z}_t=C_\psi(z_t)$ and $\mathcal{L}_{\mathrm{DAG}}$ enforcing acyclicity. \cite{yang2020causalvae}

\noindent\textbf{Assumptions.}
(I1) \emph{Stage-2 decoupling:} during Stage-2, transition/backbone parameters are frozen, so \eqref{eq:stage2_obj} is the only objective governing $(\psi,A,g_{\mathrm{align}})$.
(I2) \emph{Latent invertibility:} there exists an invertible map $g$ such that $z_t=g(v_t)$ for underlying causal state $v_t\in\mathbb R^d$.
(I3) \emph{Realizability + DAG:} there exists a ground-truth DAG adjacency $A^\star$ (acyclic) and parameters $(\psi^\star,g_{\mathrm{align}}^\star)$ attaining the population-risk minimum of \eqref{eq:stage2_obj}, and $\mathcal{L}_{\mathrm{DAG}}(A)=0$ iff $A$ is acyclic.
(I4) \emph{Alignment anchoring + scale normalization:} at the population optimum,
\begin{equation}
\label{eq:align_anchor}
g_{\mathrm{align}}(\tilde{z}_t)=s_t \quad \text{a.s.},
\end{equation}
and the coordinate system of $\tilde{z}_t$ is fixed by scale normalization (e.g., per-dimension zero-mean/unit-variance). Moreover, anchoring is axis-fixing: if an invertible $T:\mathbb R^d\!\to\!\mathbb R^d$ satisfies $g_{\mathrm{align}}(T(\tilde{z}_t))=g_{\mathrm{align}}(\tilde{z}_t)$ a.s. under the same normalization, then $T$ must be the identity.
(I5) \emph{Population/global optimum:} infinite-sample and exact optimization.
Why these assumptions are reasonable in our setting (and when they may fail) is discussed in App.~\ref{app:ident_assumptions}.

\begin{theorem}[Alignment-anchored identifiability]
\label{thm:align_ident}
Under (I1)--(I5), the Stage-2 optimizer identifies an adjacency $\hat A$ that matches $A^\star$ in the alignment-anchored coordinate system, i.e., $\hat A=A^\star$ in that coordinate frame.
\end{theorem}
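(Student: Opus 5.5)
The argument reduces identifiability of $\hat A$ to the axis-fixing clause of (I4). The plan is to show that any global minimizer of \eqref{eq:stage2_obj} differs from the ground-truth solution $(\psi^\star,A^\star,g_{\mathrm{align}}^\star)$ at most by an invertible reparametrization of the latent coordinates. (I4) then forces that reparametrization to be the identity, and the adjacency, which is read off in those coordinates, must coincide.

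\textbf{Step 1 (reduction to a fixed objective).} By (I1) the encoder is frozen, so $z_t$ has a fixed population law. By (I5) we may work with the population risk and an exact global minimizer $(\hat\psi,\hat A,\hat g_{\mathrm{align}})$. By (I3) the minimum value equals $\mathcal L^\star:=\mathcal L_{\mathrm{stage2}}(\psi^\star,A^\star,g^\star_{\mathrm{align}})$, and it is attained with $\mathcal L_{\mathrm{DAG}}(A^\star)=0$. I will argue that every term is nonnegative and that the ground-truth tuple makes each of them individually minimal. Consequently, any global minimizer must also attain each term's minimum: $\mathcal L_{\mathrm{DAG}}(\hat A)=0$, so $\hat A$ is acyclic; the alignment term is at its minimum, so \eqref{eq:align_anchor} holds; and the KL and mask terms match the state-conditioned priors componentwise.

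\textbf{Step 2 (the two solutions are related by an invertible map).} Let $\tilde z_t=C_{\psi^\star}(z_t)$ and $\hat z_t=C_{\hat\psi}(z_t)$. Using (I2), write $z_t=g(v_t)$. The reconstruction term is minimal for both solutions, so both $C$ maps must be injective on the support of $z_t$. Hence $T:=C_{\hat\psi}\circ C_{\psi^\star}^{-1}$ is an invertible map satisfying $\hat z_t=T(\tilde z_t)$. Both solutions satisfy the same per-dimension scale normalization. By Step 1 both also satisfy anchoring, so $\hat g_{\mathrm{align}}(T(\tilde z_t))=s_t=g^\star_{\mathrm{align}}(\tilde z_t)$ almost surely. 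To apply the axis-fixing clause of (I4) I need a single alignment head. I will take the anchoring clause to be stated relative to the fixed head $g^\star_{\mathrm{align}}$, which amounts to treating the head as part of the anchored coordinate frame. It may instead be necessary to argue that $\hat g_{\mathrm{align}}$ and $g^\star_{\mathrm{align}}$ agree on the support. That would follow from uniqueness of the conditional structure that the state-conditioned KL terms impose on each coordinate. Either way, (I4) yields $T=\mathrm{id}$, so $\hat z_t=\tilde z_t$ almost surely.

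\textbf{Step 3 (recovering $A$).} The adjacency enters the branch through the linear SCM layer $\epsilon\mapsto (I-A^\top)^{-1}\epsilon$ and the mask mechanism, in which each coordinate $i$ is regressed on $A_{:,i}\odot \tilde z$. Once the coordinates coincide, I use the fact that the mask term is minimized only when each coordinate's conditional mechanism matches the ground-truth parents. A nonzero entry of $\hat A-A^\star$ would then either add a spurious parent, which changes the masked conditional and strictly raises the mask KL, or remove a true parent, which also raises it. This requires a faithfulness or minimality condition, which I would extract from realizability in (I3). Together with the acyclicity of $\hat A$, this gives $\hat A=A^\star$.

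The main obstacle is Step 2. The axis-fixing clause is phrased for a common head $g_{\mathrm{align}}$, but the two optimizers may have different heads. Tying them together requires either treating the head as part of the anchored frame or invoking the state-conditioned KL and mask terms to pin down the head. A secondary obstacle is the faithfulness needed in Step 3, because an edge with zero effect would not be detected by the objective.
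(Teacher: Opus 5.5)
Your proposal follows essentially the same route as the paper's proof. You use (I1) to fix the objective and (I2) to pass to causal coordinates. You then get anchoring from (I4) for two global minimizers; the paper compares two arbitrary minimizers rather than one against the ground-truth tuple, which is equivalent because (I3) makes that tuple a minimizer. A change of coordinates $T$ is forced to be the identity by the axis-fixing clause, and (I3) plus acyclicity then gives uniqueness of the adjacency.

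The two obstacles you flag are genuine, and the paper resolves them exactly as you suggest. It silently uses one head for both minimizers, writing $\hat g(T(\hat{\tilde z}_t))=s_t$ where only $g'(T(\hat{\tilde z}_t))=s_t$ is available, and it simply asserts uniqueness of the adjacency from (I3). One difference: the paper takes anchoring directly from (I4), not from your Step~1 claim that each term is individually minimized. That claim does not follow from (I3), which controls only the weighted sum; for example, the $\mathcal{N}(0,I)$ and state-conditioned KL terms generally cannot vanish simultaneously. If the identity map were realizable, the claim would also prove too much: it would force $\mathcal{L}_{\mathrm{rec}}=0$ and hence $\hat z_t=z_t=\tilde z_t$ without any appeal to (I4).
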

This statement is conditional on (I1)--(I5) and should be interpreted as a population-level characterization, not a finite-sample optimization guarantee.

\noindent\textit{Proof.} See App.~\ref{app:ident_proof}.

\begin{figure}[t]
    \centering
    \includegraphics[width=0.70\linewidth]{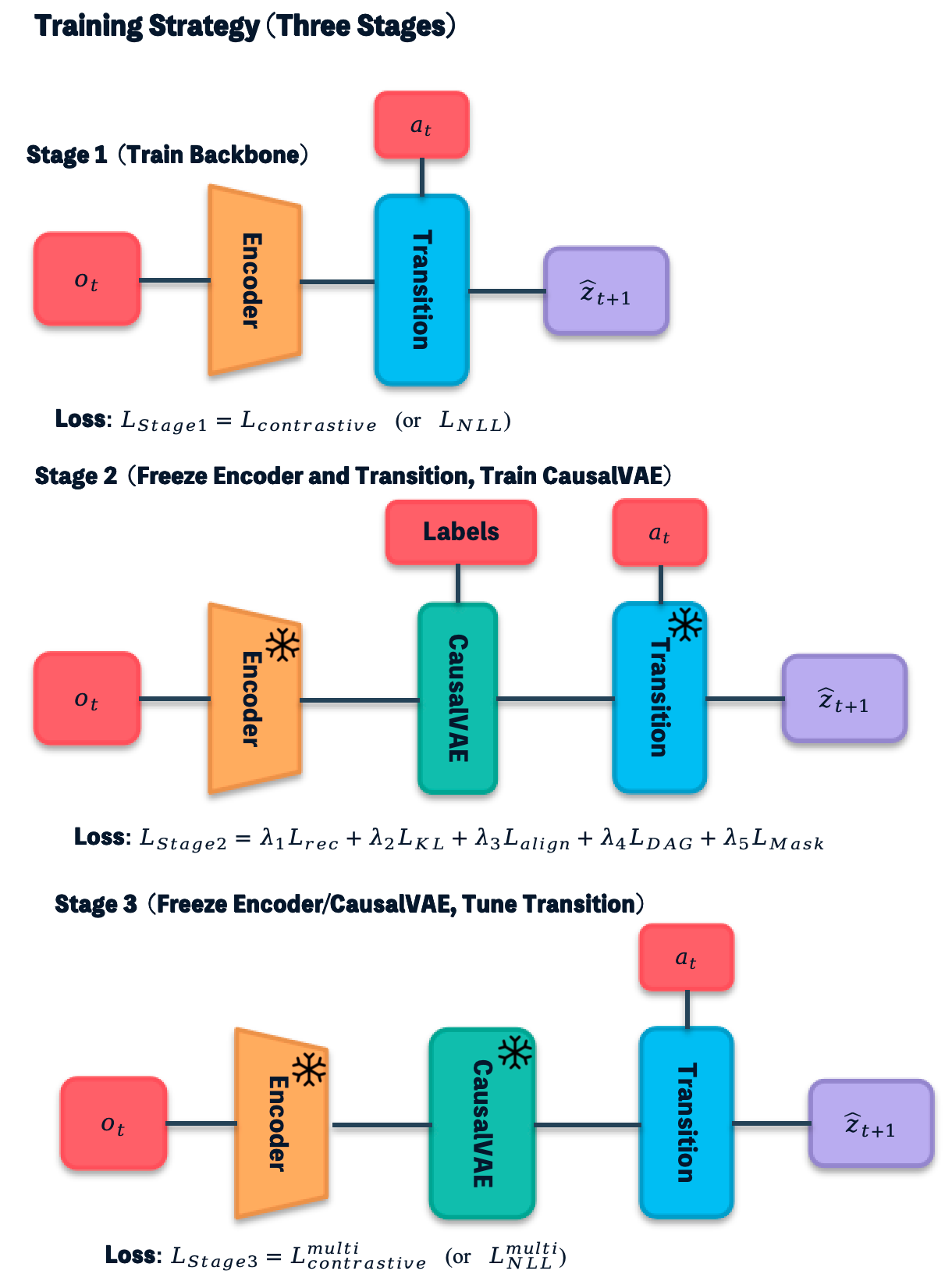}
    \caption{Three-stage training strategy.}
    \label{fig:training_strategy}
\end{figure}

\noindent\textbf{Relation to CausalVAE identifiability theorem in the original paper~\cite{yang2020causalvae}.}
The identifiability proof in CausalVAE relies on an auxiliary variable $u$ through a conditional prior $p(z\mid u)$, leading to $\sim$-identifiability of the generative parameters under that training setting.
In our model, we do \emph{not} assume access to $u$ nor optimize a conditional prior; instead, we use only the alignment loss
$\mathcal{L}_{\mathrm{align}}=\|g_{\mathrm{align}}(\tilde{z}_t)-s_t\|_2^2$
as weak supervision to anchor the latent coordinates.
Therefore, our identifiability argument does not directly invoke CausalVAE's Theorem~1; it follows a different route in which alignment eliminates the permutation/scale (and more generally reparameterization) ambiguities, after which the DAG-constrained structural learner identifies the adjacency in the anchored coordinate system.
We cite CausalVAE for (i) the formal ambiguity class captured by $\sim$-identifiability and (ii) its discussion on identifiability of the causal graph in the causal layer~\cite{yang2020causalvae}.

\subsection{Transition Modeling}
\label{sec:transition_modeling}

The transition module models action-conditioned dynamics in latent space.
Following Sec.~\ref{sec:method_overview}, the encoder latent and causal-refined latent are defined in
Eq.~\eqref{eq:overview_encoder} and Eq.~\eqref{eq:overview_causal_refine}, respectively.
Next-state prediction then follows Eq.~\eqref{eq:overview_transition}.

The supervision target is the latent encoding of the next-step observation:
\begin{equation}
z_{t+1} = E_{\theta}(o_{t+1}).
\label{eq:trans_target}
\end{equation}

To stabilize long-horizon prediction, we use residual dynamics parameterization:
\begin{equation}
\Delta z_t = f_{\phi}(\tilde{z}_t, a_t), \qquad
\hat{z}_{t+1} = \tilde{z}_t + \Delta z_t.
\label{eq:trans_residual}
\end{equation}

For multi-step rollout, predictions are generated recursively:
\begin{equation}
\hat{z}_{t+k+1} = F_{\phi}(\hat{z}_{t+k}, a_{t+k}), \quad k \ge 0,
\label{eq:trans_rollout}
\end{equation}
with $\hat{z}_t \leftarrow \tilde{z}_t$, and targets
\begin{equation}
z_{t+k} = E_{\theta}(o_{t+k}).
\label{eq:trans_rollout_target}
\end{equation}

The rollout objective is
\begin{equation}
\mathcal{L}_{\mathrm{multistep}}
=
\frac{1}{H}\sum_{k=1}^{H}\ell\!\left(\hat{z}_{t+k}, z_{t+k}\right),
\label{eq:multistep_loss}
\end{equation}
where $H$ is the rollout horizon, and $\ell(\cdot,\cdot)$ is instantiated as either a contrastive objective (as in C-SWM-style latent energy matching~\cite{kipf2020cswm}) or a latent negative log-likelihood objective (as in likelihood-based world models~\cite{ha2018worldmodels,hafner2019dreamer}).
For the contrastive case, let $z^-_{t+k}$ denote a negative latent target at step $t{+}k$ (sampled from non-matching futures in the candidate pool/batch), and let $m>0$ be the margin. We use
\begin{equation}
\ell_{\mathrm{con}}\!\left(\hat z,z\right)
=
\left\|\hat z-z\right\|_2^2
 +
\max\!\left(0,\; m-\left\|\hat z-z^- \right\|_2^2\right).
\label{eq:contrastive_latent_loss}
\end{equation}
For the NLL case, assuming an isotropic Gaussian decoder in latent space,
\begin{equation}
\ell_{\mathrm{NLL}}\!\left(\hat z,z\right)
=
-\log p_\phi\!\left(z\mid \hat z\right),\quad
p_\phi\!\left(z\mid \hat z\right)=\mathcal N\!\left(z;\hat z,\sigma^2 I\right),
\label{eq:nll_latent_loss}
\end{equation}
which is equivalent (up to constants) to a scaled squared-error term.

\subsection{Training Strategy}
\label{sec:training_strategy}

We adopt a three-stage training pipeline (Fig.~\ref{fig:training_strategy}) to decouple
(i) backbone dynamics pretraining, (ii) causal structure learning, and
(iii) long-horizon transition refinement.

\noindent\textbf{Stage 1 (Backbone pretraining).}
We optimize the encoder-transition backbone using one-step supervision defined in
Sec.~\ref{sec:method_overview}, i.e., $z_t=E_\theta(o_t)$ and
$\hat{z}_{t+1}=F_\phi(z_t,a_t)$.

\noindent\textbf{Stage 2 (Causal branch training).}
With backbone modules frozen, we train the CausalVAE branch in
Sec.~\ref{sec:causalvae_branch} using Eq.~\eqref{eq:cvae_stage2_obj},
including reconstruction, KL, alignment (when supervision is available), DAG, and mask terms. Details on how supervision is obtained and how objective slots are selected for each benchmark are provided in App.~\ref{app:supervision_slots}.

\noindent\textbf{Stage 3 (Transition Refinement with Alpha-Gated Fusion).}
We freeze the encoder and CausalVAE, and fine-tune the transition model with
\begin{equation}
z_t^{\mathrm{gate}}=(1-\alpha_t)z_t+\alpha_t \tilde{z}_t, \qquad \alpha_t\in[0,1].
\label{eq:alpha_gated_fusion}
\end{equation}
In implementation, we use an exponentially decayed mixing coefficient
($\alpha_t=\alpha_0\exp(-k_\alpha t)$), where $\alpha_0$ is the initial fusion weight and $k_\alpha$ controls decay speed. We optionally apply an additional data-dependent gate with $\alpha_t^{\mathrm{eff}}=g_t\alpha_t$ and $g_t=\sigma\!\left((\tau-\delta_t)\,\gamma\right)$, where $\delta_t=\|\tilde{z}_t-z_t\|_2$ and $(\tau,\gamma)$ are gate hyperparameters.
The transition update is $\hat{z}_{t+1}=F_\phi(z_t^{\mathrm{gate}},a_t)$, and multi-step training follows Sec.~\ref{sec:transition_modeling}
(Eq.~\eqref{eq:trans_rollout} and Eq.~\eqref{eq:multistep_loss}). We report an ablation over gate on/off in Sec.~\ref{subsec:ablation} (Tab.~\ref{tab:ablation}).
The rationale for using a three-stage schedule is also empirically validated in Sec.~\ref{subsec:ablation} via explicit w/3-stage vs.\ w/o-3-stage comparisons.

\section{Experiments}
\label{sec:experiments}

We evaluate whether injecting an explicit causal factorization via CausalVAE improves
\emph{counterfactual reliability} while preserving factual predictive performance.

\subsection{Benchmarks}
\label{subsec:benchmarks}

\begin{figure}[t]
    \centering
    \includegraphics[width=0.85\linewidth]{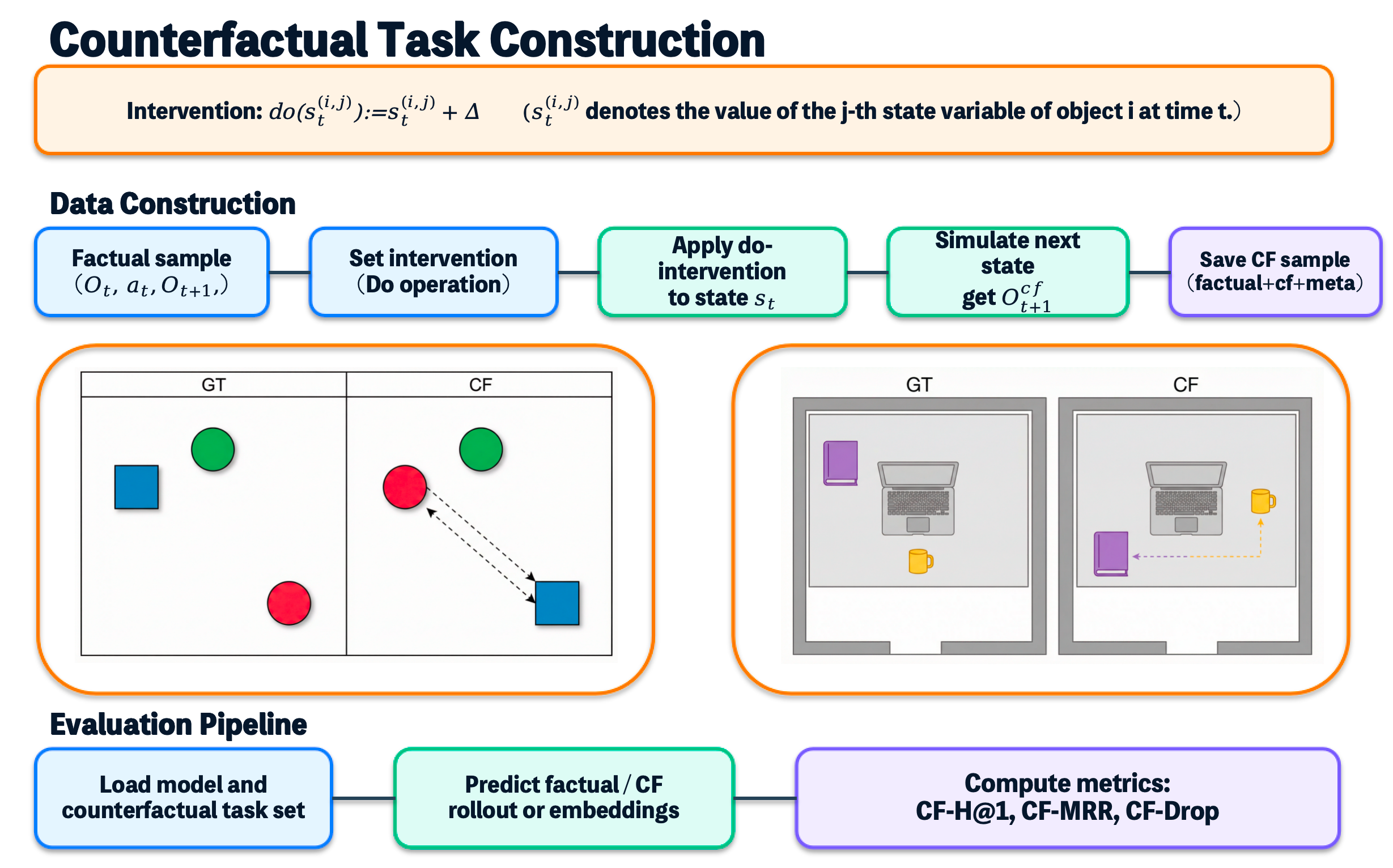}
    \caption{Counterfactual task construction and evaluation pipeline. From factual tuples $(o_t,a_t,o_{t+1})$, we apply interventions $do(s_t^{(i,j)}){:=}s_t^{(i,j)}+\Delta$, re-simulate to obtain $o_{t+1}^{cf}$, and evaluate retrieval on paired factual/counterfactual futures using H@1, MRR, CF-H@1, and CF-MRR (factual H@1/MRR at 1/5/10-step horizons).}
    \label{fig:cf_task_construction}
\end{figure}

We evaluate on a suite of environments spanning object-centric physics and perceptual manipulation in the causal discovery evaluation framework\cite{ke2021systematic}.
All benchmarks are evaluated under both factual and interventional (counterfactual) protocols, when ground-truth
counterfactual rollouts are available.
As illustrated in Fig.~\ref{fig:cf_task_construction}, counterfactual tasks are constructed by starting from factual tuples $(o_t,a_t,o_{t+1})$, applying a $do$-intervention to selected state variables at time $t$, re-simulating the next observation under the intervened state to obtain $o_{t+1}^{cf}$, and then evaluating retrieval on paired factual/counterfactual futures.
We evaluate four benchmark domains: Physics (3-body gravitation), 2D Shapes, 3D Cubes, and Chemistry. Physics supports clean state-level interventions (e.g., position/velocity), Shapes/Cubes provide object-level manipulation interventions (e.g., pose/push), and Chemistry provides mechanism-level interventions when available.
Detailed counterfactual benchmark construction (query-group counts, intervention variables, magnitude settings, and candidate-set protocol) is provided in App.~\ref{app:protocol_details}.

\subsection{Baselines}
\label{subsec:baselines}

\paragraph{Eight baselines (paired with +CausalVAE).}
We compare eight standard world-model baselines against matched \textbf{Baseline + CausalVAE} counterparts:

\smallskip
\noindent\textbf{AE-based:} AE\_NLL, AE\_Contrastive.\;\\
\textbf{VAE-based:} VAE\_NLL, VAE\_Contrastive.\;\\
\textbf{Structured latent dynamics:} Modular\_NLL, Modular\_Contrastive.\;\\
\textbf{Graph dynamics:} GNN\_NLL, GNN\_Contrastive.
Detailed baseline network architectures and objective definitions (NLL/contrastive) are provided in App.~\ref{app:protocol_details}.

\paragraph{Our model.}
Starting from each baseline backbone, we insert a latent CausalVAE structural layer and train with three stages (S1/S2/S3) for factual dynamics, structural constraint learning, and counterfactual refinement.
This S1/S2/S3 schedule applies to \textbf{Baseline+CausalVAE} models; pure baselines follow the baseline objective settings (NLL, contrastive) described in App.~\ref{app:protocol_details}.

\paragraph{Training protocol.}
\label{subsec:training}
We use a unified training protocol for fair paired comparison (Baseline vs.\ Baseline+CausalVAE): same data splits, seeds, optimizer family, and rollout settings.
The full hyperparameter and configuration details, together with benchmark-specific settings, are provided in App.~\ref{app:protocol_details}, including detailed AE/VAE/GNN/Modular architecture and loss-function specifications aligned with the reference baseline protocol.

\paragraph{Intervention and retrieval protocols.}
\label{subsec:protocol}
We evaluate counterfactual reasoning via controlled interventions at time $t_0$ and retrieval over factual/counterfactual future candidates under a shared protocol.
Detailed intervention construction (single/multi-target, magnitude sweeps, candidate sampling, and horizon settings) is deferred to App.~\ref{app:protocol_details}.

\subsection{Metrics}
\label{subsec:metrics}

We report four retrieval metrics: \textbf{H@1}, \textbf{MRR}, \textbf{CF-H@1}, and \textbf{CF-MRR}. For query $i$ with factual rank $r_i$ and counterfactual rank $r_i^{cf}$, we use
$\mathrm{H@1}=\frac{1}{M}\sum_{i=1}^{M}\mathbf{1}[r_i=1]$,
$\mathrm{MRR}=\frac{1}{M}\sum_{i=1}^{M}\frac{1}{r_i}$,
$\mathrm{CF\text{-}H@1}=\frac{1}{M}\sum_{i=1}^{M}\mathbf{1}[r_i^{cf}=1]$, and
$\mathrm{CF\text{-}MRR}=\frac{1}{M}\sum_{i=1}^{M}\frac{1}{r_i^{cf}}$.
Factual H@1/MRR are reported at 1/5/10-step horizons using the same definitions on each horizon-specific retrieval task. CF-H@1/CF-MRR are reported for the counterfactual step.

\subsection{Benchmarking Results}
\label{subsec:main_results}

\paragraph{Factual retrieval comparison.}
Tab.~\ref{tab:summary_main} reports the factual retrieval performance
(H@1 and MRR) of all baseline families and their +CausalVAE variants
across all benchmarks.

\begin{table}[!htbp]
    \centering
\scriptsize
\setlength{\tabcolsep}{2.4pt}
\renewcommand{\arraystretch}{0.92}
    \caption{Comparison on factual retrieval metrics.
    Each cell reports Baseline/+CausalVAE. Bold indicates the best value in each column.}
    \label{tab:summary_main}
\resizebox{0.99\linewidth}{!}{%
    \begin{tabular}{llcccccc}
    \toprule
    & & \multicolumn{2}{c}{1 Step} & \multicolumn{2}{c}{5 Steps} & \multicolumn{2}{c}{10 Steps} \\
    \cmidrule(lr){3-4}\cmidrule(lr){5-6}\cmidrule(lr){7-8}
    Benchmark & Method
    & H@1 $\uparrow$ & MRR $\uparrow$
    & H@1 $\uparrow$ & MRR $\uparrow$
    & H@1 $\uparrow$ & MRR $\uparrow$ \\
    \midrule
    
\multirow{8}{*}{Physics 3-body}
& AE\_Contrastive
& \num{89.00}/\num{98.00} & \num{94.25}/\num{99.00}
& \num{10.00}/\num{9.00}  & \num{27.91}/\num{25.72}
& \num{1.00}/\num{3.00}   & \num{6.59}/\num{9.31} \\

& AE\_NLL
& \num{92.00}/\num{94.00} & \num{95.67}/\num{96.83}
& \num{4.00}/\num{4.00}   & \num{16.58}/\num{16.39}
& \num{0.00}/\num{1.00}   & \num{8.93}/\num{9.04} \\

& VAE\_Contrastive
& \num{70.00}/\num{91.00} & \num{83.09}/\num{95.33}
& \num{1.00}/\num{3.00}   & \num{4.96}/\num{10.44}
& \num{2.00}/\num{2.00}   & \num{5.74}/\num{5.73} \\

& VAE\_NLL
& \num{92.00}/\num{94.00} & \num{95.83}/\num{97.00}
& \num{1.00}/\num{2.00}   & \num{10.19}/\num{9.46}
& \num{0.00}/\num{0.00}   & \num{3.61}/\num{3.64} \\

& Modular\_Contrastive
& \num{90.00}/\num{86.00} & \num{94.42}/\num{92.33}
& \num{20.00}/\num{8.00}  & \num{36.31}/\num{21.23}
& \num{3.00}/\num{2.00}   & \num{10.44}/\num{6.37} \\

& Modular\_NLL
& \num{96.00}/\num{95.00} & \num{97.83}/\num{97.50}
& \num{11.00}/\num{11.00} & \num{30.28}/\num{28.36}
& \num{1.00}/\num{3.00}   & \num{9.38}/\num{9.99} \\

& GNN\_Contrastive
& \num{88.00}/\textbf{\num{99.00}} & \num{93.37}/\textbf{\num{99.50}}
& \num{25.00}/\textbf{\num{36.00}} & \num{37.77}/\textbf{\num{54.58}}
& \num{3.00}/\textbf{\num{7.00}}   & \num{11.89}/\textbf{\num{17.00}} \\

& GNN\_NLL
& \num{98.00}/\num{99.00} & \num{99.00}/\num{99.50}
& \num{9.00}/\num{17.00}  & \num{29.36}/\num{33.32}
& \textbf{\num{7.00}}/\num{5.00} & \num{15.27}/\num{12.57} \\

\midrule
\multirow{8}{*}{Chemistry}
& AE\_Contrastive
& \num{99.91}/\num{99.90} & \num{99.95}/\num{99.95}
& \num{99.84}/\num{99.81} & \num{99.92}/\num{99.91}
& \textbf{\num{99.92}}/\num{99.33} & \textbf{\num{99.96}}/\num{99.66} \\

& AE\_NLL
& \num{99.86}/\num{99.89} & \num{99.93}/\num{99.94}
& \num{98.17}/\num{96.77} & \num{98.98}/\num{98.07}
& \num{93.24}/\num{85.73} & \num{95.70}/\num{90.13} \\

& VAE\_Contrastive
& \num{99.95}/\num{99.96} & \num{99.98}/\num{99.98}
& \num{99.94}/\num{99.73} & \num{99.97}/\num{99.87}
& \textbf{\num{99.92}}/\num{98.60} & \textbf{\num{99.96}}/\num{99.30} \\

& VAE\_NLL
& \num{96.03}/\num{97.47} & \num{97.65}/\num{98.65}
& \num{88.50}/\num{74.59} & \num{92.69}/\num{82.31}
& \num{78.20}/\num{25.32} & \num{85.05}/\num{36.50} \\

& Modular\_Contrastive
& \num{99.50}/\num{16.03} & \num{99.75}/\num{33.95}
& \num{77.97}/\num{6.48}  & \num{88.06}/\num{17.97}
& \num{30.50}/\num{3.76}  & \num{53.24}/\num{11.83} \\

& Modular\_NLL
& \num{99.97}/\textbf{\num{99.99}} & \num{99.98}/\textbf{\num{99.99}}
& \num{99.43}/\num{98.63} & \num{99.69}/\num{99.24}
& \num{95.74}/\num{67.20} & \num{97.18}/\num{76.73} \\

& GNN\_Contrastive
& \num{99.95}/\num{99.95} & \num{99.98}/\num{99.98}
& \textbf{\num{99.96}}/\num{99.74} & \textbf{\num{99.98}}/\num{99.87}
& \num{99.86}/\num{99.48} & \num{99.93}/\num{99.74} \\

& GNN\_NLL
& \num{99.95}/\num{99.91} & \num{99.98}/\num{99.95}
& \num{99.94}/\num{99.38} & \num{99.97}/\num{99.66}
& \num{99.91}/\num{89.28} & \num{99.95}/\num{92.74} \\
\midrule
\multirow{8}{*}{2D Shapes}
& AE\_Contrastive
& \num{94.01}/\num{94.03} & \num{96.58}/\num{96.75}
& \num{58.84}/\num{58.16} & \num{71.95}/\num{71.43}
& \num{28.34}/\num{27.44} & \num{43.42}/\num{42.79} \\

& AE\_NLL
& \num{99.42}/\num{98.83} & \num{99.60}/\num{99.25}
& \num{81.31}/\num{77.28} & \num{86.45}/\num{83.26}
& \num{42.57}/\num{40.37} & \num{52.23}/\num{50.06} \\

& VAE\_Contrastive
& \num{76.51}/\num{87.70} & \num{84.81}/\num{93.35}
& \num{20.30}/\num{1.75}  & \num{34.95}/\num{7.11}
& \num{5.58}/\num{0.29}   & \num{13.67}/\num{1.37} \\

& VAE\_NLL
& \num{59.45}/\num{60.37} & \num{66.39}/\num{67.29}
& \num{9.84}/\num{10.82}  & \num{14.11}/\num{14.93}
& \num{2.00}/\num{2.14}   & \num{3.55}/\num{3.72} \\

& Modular\_Contrastive
& \num{3.25}/\num{2.16}   & \num{14.23}/\num{9.06}
& \num{0.73}/\num{0.19}   & \num{4.42}/\num{1.63}
& \num{0.33}/\num{0.08}   & \num{2.48}/\num{0.80} \\

& Modular\_NLL
& \num{99.80}/\num{99.62} & \num{99.89}/\num{99.78}
& \num{85.69}/\num{83.88} & \num{90.43}/\num{88.25}
& \num{43.04}/\num{47.10} & \num{53.76}/\num{56.21} \\

& GNN\_Contrastive
& \textbf{\num{99.91}}/\num{99.90} & \textbf{\num{99.95}}/\num{99.95}
& \textbf{\num{98.89}}/\num{97.60} & \textbf{\num{99.40}}/\num{98.60}
& \textbf{\num{96.11}}/\num{86.02} & \textbf{\num{97.52}}/\num{90.72} \\

& GNN\_NLL
& \num{94.42}/\num{95.73} & \num{96.90}/\num{97.73}
& \num{43.77}/\num{48.58} & \num{54.75}/\num{60.43}
& \num{17.14}/\num{20.79} & \num{25.67}/\num{30.64} \\
\midrule
\multirow{8}{*}{3D Cubes}
& AE\_Contrastive
& \num{68.37}/\num{69.69} & \num{76.65}/\num{78.64}
& \num{19.49}/\num{18.79} & \num{31.29}/\num{31.00}
& \num{7.35}/\num{5.72} & \textbf{\num{15.50}}/\num{13.61} \\

& AE\_NLL
& \num{78.90}/\textbf{\num{79.64}} & \num{85.18}/\textbf{\num{85.79}}
& \num{26.45}/\textbf{\num{27.42}} & \num{36.38}/\textbf{\num{37.44}}
& \num{7.54}/\textbf{\num{8.23}} & \num{13.04}/\num{13.82} \\

& VAE\_Contrastive
& \num{65.25}/\num{66.09} & \num{74.08}/\num{74.73}
& \num{14.72}/\num{17.49} & \num{25.15}/\num{28.54}
& \num{4.36}/\num{6.71} & \num{9.88}/\num{13.87} \\

& VAE\_NLL
& \num{55.44}/\num{52.58} & \num{61.86}/\num{58.36}
& \num{10.69}/\num{9.36} & \num{15.21}/\num{12.97}
& \num{2.27}/\num{1.45} & \num{4.19}/\num{2.85} \\

& Modular\_Contrastive
& \num{3.18}/\num{11.32} & \num{9.06}/\num{23.37}
& \num{0.39}/\num{0.42} & \num{1.78}/\num{2.12}
& \num{0.21}/\num{0.13} & \num{1.09}/\num{0.78} \\

& Modular\_NLL
& \num{64.77}/\num{65.09} & \num{73.26}/\num{73.52}
& \num{14.94}/\num{15.88} & \num{23.17}/\num{24.33}
& \num{2.95}/\num{3.49} & \num{6.52}/\num{7.76} \\

& GNN\_Contrastive
& \num{60.88}/\num{59.38} & \num{69.17}/\num{68.34}
& \num{15.71}/\num{2.67} & \num{24.89}/\num{7.67}
& \num{5.45}/\num{0.13} & \num{11.09}/\num{0.86} \\

& GNN\_NLL
& \num{59.61}/\num{60.13} & \num{67.00}/\num{67.44}
& \num{12.23}/\num{13.34} & \num{19.04}/\num{20.39}
& \num{2.66}/\num{3.06} & \num{5.71}/\num{6.59} \\
\bottomrule
\end{tabular}}
\end{table}
\FloatBarrier

\paragraph{Counterfactual retrieval comparison.}
Tab.~\ref{tab:summary_cf} reports the counterfactual retrieval performance
(CF-H@1 and CF-MRR) of all baseline families and their +CausalVAE variants
across all benchmarks.

\paragraph{Results analysis.}
Across benchmarks (Tabs.~\ref{tab:summary_main} and \ref{tab:summary_cf}), CausalVAE usually preserves factual retrieval while improving counterfactual retrieval.
The strongest evidence is on Physics: CF-H@1 jumps from 11.0 to 41.0 for GNN\_NLL, and AE\_NLL gains +30.5.
On Chemistry and 2D/3D, multiple backbones (VAE, Modular, GNN\_Contrastive) still gain roughly +9 to +21 CF-H@1 points.

The pattern is therefore not a small average effect; it is a large improvement on intervention-focused metrics in several settings.
At the same time, gains are not universal: some factual and counterfactual cells drop (e.g. Chemistry Modular\_Contrastive in factual metrics), showing a clear backbone-domain interaction.
This sharpens the takeaway: the plug-in is most valuable when baseline dynamics are intervention-fragile, but it should be selected per backbone rather than assumed to dominate uniformly.

\vspace{0em}

\subsection{Causal Analysis (Causal Discovery Results)}
\label{subsec:causal_analysis}
\label{sec:learnedA_vs_physics}

We report causal discovery results by comparing learned structural dependencies against physics-derived first-order interaction templates in the Physics benchmark. A key takeaway is that the learned structural matrix is not only predictive, but also recovers physically meaningful interaction patterns: it captures who influences whom and with what relative strength, consistent with the local first-order dynamics implied by the underlying system equations.
Let the continuous-time system be $\dot{s}(t)=f(s(t))$. Using one-step Euler discretization with step size $\Delta t$, we write
\begin{equation}
s_{t+1}^{\mathrm{true}}=F_{\mathrm{true}}(s_t)\approx s_t+\Delta t\,f(s_t).
\end{equation}
To obtain a local first-order form, we linearize $F_{\mathrm{true}}$ around a reference state $s^\star$:
$F_{\mathrm{true}}(s_t)=F_{\mathrm{true}}(s^\star)+\left.\frac{\partial F_{\mathrm{true}}}{\partial s}\right|_{s^\star}(s_t-s^\star)+\mathcal{O}\!\left(\|s_t-s^\star\|^2\right)$.
Define the local Jacobian as $J(s^\star):=\left.\frac{\partial F_{\mathrm{true}}}{\partial s}\right|_{s^\star}$; from Euler form,
$J(s^\star)\approx I+\Delta t\left.\frac{\partial f}{\partial s}\right|_{s^\star}$ and
$J(s^\star)-I\approx\Delta t\left.\frac{\partial f}{\partial s}\right|_{s^\star}$.
For visualization of interaction strength (excluding identity carry-over), we define
$A_{\mathrm{GT}}:=\left|J(s^\star)-I\right|$ (element-wise absolute value), and compare
$A_{\mathrm{Learned}}$ with $A_{\mathrm{GT}}$ at the mechanism-trend level (global coupling/channel pattern), rather than enforcing strict element-wise equality.
This provides evidence that the model discovers an interpretable first-order physical law in latent space: dominant interaction channels and their relative ordering are aligned with the local linearized dynamics.

\begin{table}[!htbp]
    \centering
\tiny
\setlength{\tabcolsep}{2.4pt}
\renewcommand{\arraystretch}{0.92}
    \caption{Comparison on counterfactual retrieval metrics.
    Each cell reports Baseline/+CausalVAE. $\Delta$ indicates the absolute improvement over the baseline.}
    \label{tab:summary_cf}
    
\resizebox{0.92\linewidth}{!}{%
    \begin{tabular}{l l c c c c}
    \toprule
    Benchmark & Method & CF-H@1 $\uparrow$ & $\Delta$H@1 $\uparrow$ & CF-MRR $\uparrow$ & $\Delta$MRR $\uparrow$ \\
    \midrule
    
    \multirow{8}{*}{Physics 3-body}
& AE\_Contrastive & 10.00/24.00 & +14.00 & 49.33/51.77 & +2.44 \\
& AE\_NLL         & 10.50/41.00 & +30.50 & 52.25/66.92 & +14.67 \\

& VAE\_Contrastive & 8.50/8.50 & -- & 46.00/45.40 & -- \\
& VAE\_NLL         & 10.50/13.00 & +2.50 & 51.75/51.00 & -- \\

& Modular\_Contrastive & 22.50/25.00 & +2.50 & 59.83/50.62 & -- \\
& Modular\_NLL         & 14.50/22.00 & +7.50 & 55.83/57.50 & +1.67 \\

& GNN\_Contrastive & 11.50/15.00 & +3.50 & 52.96/53.17 & +0.21 \\
& GNN\_NLL         & 11.00/41.00 & +30.00 & 53.50/68.88 & +15.38 \\

\midrule

\multirow{8}{*}{Chemistry}
& AE\_Contrastive & 52.61/35.87 & -- & 59.47/55.81 & -- \\
& AE\_NLL         & 35.91/34.34 & -- & 55.82/54.12 & -- \\

& VAE\_Contrastive & 15.38/24.61 & +9.23 & 33.53/48.24 & +14.71 \\
& VAE\_NLL         & 15.50/25.26 & +9.76 & 33.62/45.80 & +12.18 \\

& Modular\_Contrastive & 13.05/27.49 & +14.44 & 33.22/48.67 & +15.45 \\
& Modular\_NLL         & 28.30/29.56 & +1.26 & 50.46/51.09 & +0.63 \\

& GNN\_Contrastive & 13.54/25.76 & +12.22 & 32.96/47.66 & +14.70 \\
& GNN\_NLL         & 25.96/23.70 & -- & 48.44/45.20 & -- \\

\midrule

\multirow{8}{*}{2D Shapes}
& AE\_Contrastive & 53.10/52.53 & -- & 70.97/70.86 & -- \\
& AE\_NLL         & 65.75/67.82 & +2.07 & 80.22/81.64 & +1.42 \\

& VAE\_Contrastive & 46.97/45.31 & -- & 66.15/65.74 & -- \\
& VAE\_NLL         & 19.70/19.40 & -- & 41.76/43.11 & +1.35 \\

& Modular\_Contrastive & 13.85/8.55 & -- & 32.39/22.92 & -- \\
& Modular\_NLL         & 62.80/68.35 & +5.55 & 78.46/81.88 & +3.42 \\

& GNN\_Contrastive & 67.06/68.50 & +1.44 & 81.11/81.94 & +0.83 \\
& GNN\_NLL         & 33.00/54.75 & +21.75 & 58.81/74.00 & +15.19 \\

\midrule

\multirow{8}{*}{3D Cubes}
& AE\_Contrastive & 29.90/33.20 & +3.30 & 53.02/55.40 & +2.38 \\
& AE\_NLL         & 39.90/41.65 & +1.75 & 60.41/63.11 & +2.70 \\

& VAE\_Contrastive & 27.15/27.65 & +0.50 & 50.08/51.01 & +0.93 \\
& VAE\_NLL         & 22.15/20.35 & -- & 38.32/41.01 & +2.69 \\

& Modular\_Contrastive & 7.90/16.75 & +8.85 & 19.07/33.80 & +14.73 \\
& Modular\_NLL         & 30.60/30.90 & +0.30 & 53.74/54.83 & +1.09 \\

& GNN\_Contrastive & 20.05/18.50 & -- & 44.28/42.57 & -- \\
& GNN\_NLL         & 28.45/25.90 & -- & 48.15/48.73 & +0.58 \\
\bottomrule
\end{tabular}}
\end{table}
\FloatBarrier

\noindent\textbf{Limitation.}
The above correspondence is inherently local and first-order. Higher-order nonlinear terms in the Taylor expansion are not captured by the linear structural matrix. Therefore, the discovered law is a first-order approximation rather than a full nonlinear governing equation.

\noindent\textbf{Visualization note.}
To make this comparison explicit, we visualize the learned and reference interaction patterns in Fig.~\ref{fig:gt_vs_learned}. We place this figure after the analytical discussion so that it serves as supporting evidence rather than interrupting the flow of the result narrative.

\begin{figure}[t]
    \centering
    \includegraphics[width=0.68\linewidth]{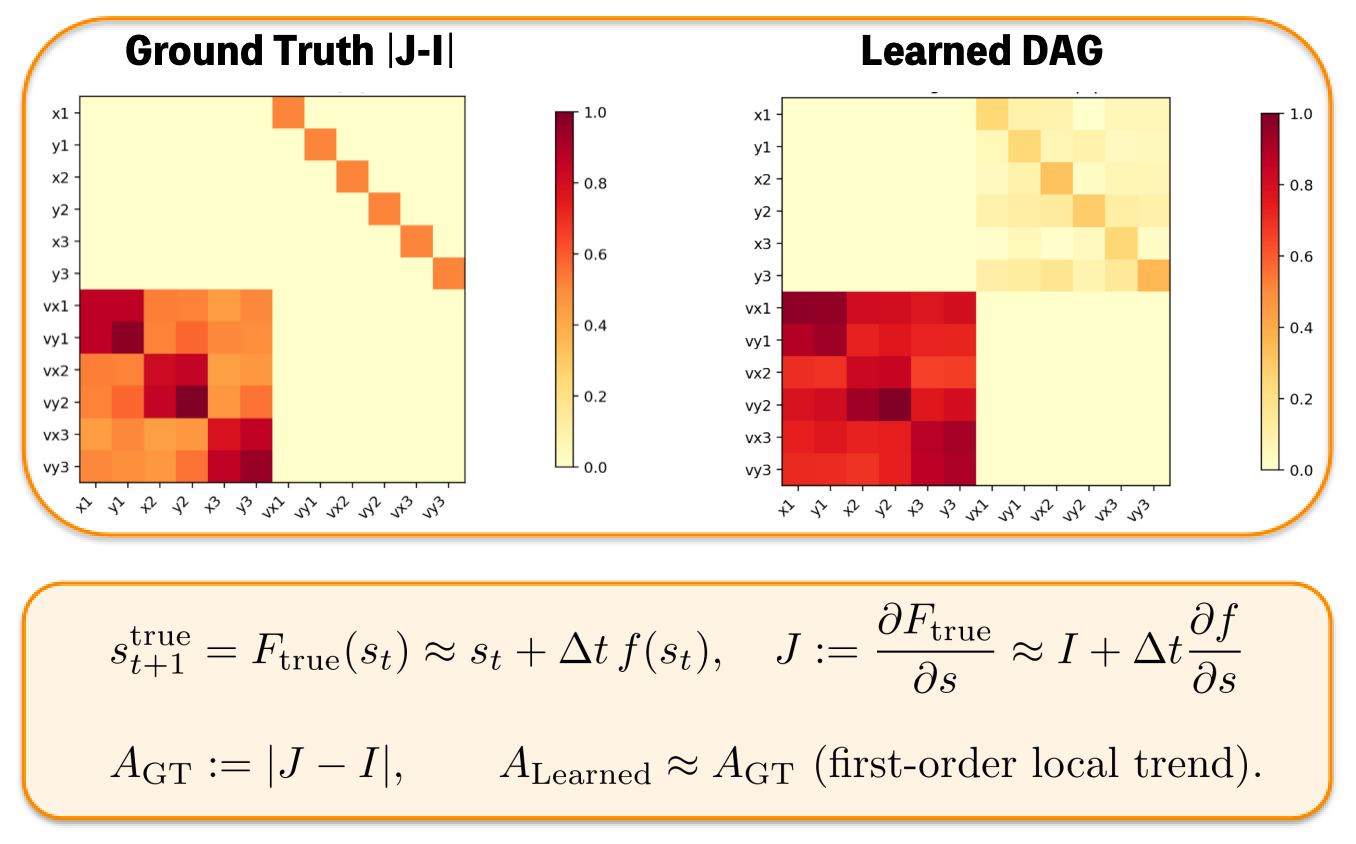}
    \caption{
    Comparison between learned structure and first-order physical template.
    The physical reference matrix is defined as $A_{\mathrm{GT}}=\left|J-I\right|$ from local linearization.
    The alignment indicates that the model recovers physically meaningful interaction trends in latent space, while remaining a first-order approximation rather than an exact nonlinear law.
    }
    \label{fig:gt_vs_learned}
\end{figure}
\FloatBarrier

\subsection{Ablation Results}
\label{subsec:ablation}



As summarized in Tab.~\ref{tab:ablation}, we ablate components that correspond directly to our staged design and rollout training choices on the Physics benchmark. Concretely, we compare \textbf{three-stage training} (S1+S2+S3) against \textbf{joint training} (w/o stage split), and include the corresponding \textbf{backbone baseline} (GNN\_Contrastive) as a reference point. We further remove key ingredients one at a time: \textbf{CausalVAE branch} (bypassing the structural module), \textbf{state alignment loss} in Stage-2, \textbf{multi-step rollout supervision} (single-step only), and the \textbf{contrastive objective}. Finally, we test \textbf{alpha-gated fusion} by turning the gate off, and report sensitivity to \textbf{stage split schedules} and \textbf{rollout policies} (curriculum vs.\ mixed).

\begin{table}[t]
\centering
\tiny
\setlength{\tabcolsep}{3pt}
\renewcommand{\arraystretch}{0.92}
\caption{Ablations of training strategy, module removal, and hyperparameter sensitivity in the Physics benchmark.}
\label{tab:ablation}
\resizebox{0.78\linewidth}{!}{%
\begin{tabular}{@{}l
S[table-format=2.2]
S[table-format=2.2]
S[table-format=2.2]
S[table-format=2.2]@{}}
\toprule
Variant & \multicolumn{1}{c}{H@1 $\uparrow$} & \multicolumn{1}{c}{MRR $\uparrow$} & \multicolumn{1}{c}{CF-H@1 $\uparrow$} & \multicolumn{1}{c}{CF-MRR $\uparrow$} \\
\midrule
\textbf{three-stage training (ours)} & {\bfseries 91.00} & {\bfseries 95.16} & 31.00 & 52.15 \\
joint training (w/o stage split) & 90.33 & 94.75 & 28.00 & 51.05 \\
Baseline (GNN\_Contrastive) & 88.00 & 93.37 & 11.50 & {\bfseries 52.96} \\
\midrule
w/o CausalVAE branch & 80.00 & 89.55 & 27.67 & 50.06 \\
w/o state align loss & 9.00 & 28.02 & 26.67 & 50.57 \\
single-step rollout & 90.67 & 94.91 & 30.67 & 52.31 \\
w/o contrastive loss & 89.33 & 94.36 & 29.67 & 51.79 \\
\midrule
gate off & 1.33 & 5.71 & 13.67 & 28.09 \\
stage split: s1\_8\_s2\_40 & {\bfseries 91.00} & 95.08 & 28.67 & 51.45 \\
stage split: s1\_12\_s2\_48 & 90.67 & 94.90 & 27.33 & 50.71 \\
rollout policy: curriculum & {\bfseries 91.00} & 95.06 & {\bfseries 31.33} & 52.32 \\
rollout policy: mixed & 90.00 & 94.75 & 29.67 & 51.88 \\
\bottomrule
\end{tabular}
}
\end{table}

\vspace{0.3em}
\noindent\textit{Note.}
Unless otherwise specified, each ablation changes only one factor relative to \textbf{three-stage training (ours)}
(Ours setting: stage1=20 epochs, stage2=80 epochs, late-mixed rollout, contrastive loss, gate on).
Abbreviations: w/o = without; state align = state alignment loss; s1/s2 = stage1/stage2 epoch counts (e.g. s1\_8\_s2\_40 means stage1 8, stage2 40 epochs); CF = counterfactual (CF-H@1/CF-MRR defined in Sec.~\ref{subsec:metrics}).

Overall, three-stage optimization yields consistently better counterfactual retrieval than joint training while maintaining factual accuracy. Notably, removing the alignment objective or disabling the fusion gate can severely degrade factual retrieval, indicating that anchoring and controlled fusion are critical for stabilizing the learned causal latent space in sequential rollouts.

\section{Discussion}
The plug-in causal layer improves counterfactual reliability while keeping factual retrieval competitive across matched backbones, suggesting that gains come from structural constraints rather than architecture replacement.
From a broader perspective, this supports a modular path for robust world modeling: causal structure can be added to existing predictive systems without redesigning the full backbone.
Current limitations remain in strong nonlinear regimes and long-horizon drift; extending this approach to larger JEPA-style predictive architectures is a natural next step~\cite{assran2023ijepa,bardes2024revisiting,nam2026causaljepa}.

\section{Conclusion}
We presented a plug-in CausalVAE enhancement for latent world models that improves intervention-aware counterfactual behavior while keeping factual performance competitive.
Across diverse backbones and benchmarks, the results show that explicit latent causal factorization can deliver practical gains beyond standard predictive training, especially on intervention-focused metrics.
The observed first-order structural alignment further suggests that predictive latent spaces can be made more interpretable without sacrificing core forecasting ability.

\bibliographystyle{plainnat}
\clearpage
\bibliography{main}
\clearpage
\appendix
\section{Experimental Protocol Details}
\label{app:protocol_details}

\paragraph{Fair comparison setup.}
Across all baselines and domains, we enforce paired, matched runs: each Baseline model and its Baseline+CausalVAE counterpart use the same train/validation/test split, the same seed index, and the same data ordering. When official splits are provided by a benchmark, we use them directly; otherwise we create one fixed split per domain and reuse it for all methods. For every paired comparison, we keep optimizer family, learning-rate schedule, number of updates/epochs, and rollout horizon settings identical, so the only intended change is whether the causal structural branch is enabled.

\paragraph{Concrete defaults used in our implementation.}
Tab.~\ref{tab:train_defaults} lists the concrete defaults used in paired comparisons.

\begin{table}[htbp]
\centering
\caption{Concrete training defaults used for paired comparisons.}
\label{tab:train_defaults}
\resizebox{\linewidth}{!}{%
\begin{tabular}{@{}lll@{}}
\toprule
Setting & Default & Scope \\
\midrule
Baseline script & \texttt{train\_baselines.py} & all baseline / +CausalVAE paired runs \\
Seed & 42 & shared within each paired comparison \\
Optimizer & Adam & shared within each paired comparison \\
Batch size & 1024 & \texttt{train\_baselines.py} default \\
Learning rates & \texttt{slr=lr=transit-lr=finetune-lr}=5e-4 & \texttt{train\_baselines.py} default \\
Epoch settings & \texttt{epochs=100}, \texttt{pretrain-epochs=100}, \texttt{finetune-epochs=100} & \texttt{train\_baselines.py} default \\
Stage-3 script & \texttt{train\_stage3\_three\_suggestions.py} & transition refinement runs \\
Stage-3 batch size & 256 & Stage-3 default \\
Stage-3 learning rate & 1e-4 & Stage-3 default \\
Stage-3 epochs & 60 & Stage-3 default \\
Train rollout length & 5 (non-mixed), up to 10 (mixed / late-mixed) & rollout schedule configuration \\
\bottomrule
\end{tabular}%
}
\end{table}
\FloatBarrier

\paragraph{Baseline model configurations (with reference protocol).}
To make baseline comparisons fully reproducible, we explicitly document both (i) the reference baseline protocol from Ke \etal~\cite{ke2021systematic} and (ii) the exact defaults used in our codebase for AE/VAE/GNN/Modular baselines. As in the reference protocol, we keep model family, objective type, and rollout evaluation horizons aligned across methods, and only change the intended method component in paired comparisons. An itemized reference-vs-implementation summary is provided in Tab.~\ref{tab:baseline_configs}.

\begin{table}[htbp]
\centering
\caption{Baseline configuration details: reference protocol vs. our implementation defaults.}
\label{tab:baseline_configs}
\footnotesize
\begin{tabularx}{\linewidth}{@{}p{0.18\linewidth}p{0.38\linewidth}p{0.38\linewidth}@{}}
\toprule
Component & Ke \etal~\cite{ke2021systematic} reference setup & Our implementation defaults \\
\midrule
Baseline families & AE, VAE, GNN, Modular & AE, VAE, GNN, Modular (same family split) \\
Encoder/decoder backbone & Kipf-style CNN + MLP encoder/decoder; medium setting; object-wise latents for structured models~\cite{kipf2020cswm}. & \texttt{encoder=small}; shared world-model backbone in \texttt{train\_baselines.py}. \\
Latent representation & Fixed per-object embedding (reported as 32 per object in reference setup). & \texttt{embedding-dim-per-object}=5, \texttt{num-objects}=5 (domain-adjusted when needed). \\
Transition parameterization & AE/VAE: MLP transition; GNN: pairwise message passing; Modular: object-wise modular transition. & Model-type-specific transitions selected by \texttt{--vae}, \texttt{--gnn}, \texttt{--modular}. \\
Training objectives & NLL and contrastive settings; ranking metrics at 1/5/10-step horizons. & NLL (default) and contrastive route (\texttt{--contrastive}); factual metrics reported at $h\in\{1,5,10\}$. \\
NLL-type optimization budget & Adam, lr \(=5\times10^{-4}\), batch size 512, 100-epoch settings (reported). & \texttt{pretrain-epochs}=100, \texttt{epochs}=100, \texttt{finetune-epochs}=100. \\
Optimizer / LR / batch size & Adam, lr \(=5\times10^{-4}\), batch size 512 (reported). & Adam; \texttt{lr=slr=transit-lr=finetune-lr}=5e-4; \texttt{batch-size}=1024. \\
Action/state interface & Action-conditioned transition and object-centric intervention setting. & \texttt{action-dim}=5 with one-hot action encoding over object-action slots. \\
\bottomrule
\end{tabularx}
\end{table}
\FloatBarrier

\paragraph{Detailed baseline architectures and loss functions.}
We provide a detailed, implementation-level specification for AE/VAE/GNN/Modular baselines in this section, aligned with the reference baseline protocol in Ke \etal~\cite{ke2021systematic}. The alignment is at the level of model family split and objective settings (NLL, contrastive). The architecture-level mapping is summarized in Tab.~\ref{tab:baseline_arch_details}, and concrete optimization defaults are listed in Tab.~\ref{tab:baseline_loss_defaults}.

\paragraph{Shared notation.}
Given $(o_t,a_t,o_{t+1})$, each baseline maps observations to latent states $z_t=E_\theta(o_t)$ and $z_{t+1}=E_\theta(o_{t+1})$, predicts next latent $\hat{z}_{t+1}=F_\phi(z_t,a_t)$, and optionally reconstructs observations by $\hat{o}_t=D_\eta(z_t)$ and $\hat{o}_{t+1}=D_\eta(\hat{z}_{t+1})$.

\begin{table}[htbp]
\centering
\caption{Baseline backbone details used in this work (corresponding to the AE/VAE/GNN/Modular split in Ke \etal~\cite{ke2021systematic}).}
\label{tab:baseline_arch_details}
\small
\begin{tabularx}{\linewidth}{@{}p{0.11\linewidth}p{0.23\linewidth}p{0.27\linewidth}p{0.33\linewidth}@{}}
\toprule
Model & Latent structure & Transition module & Key implementation defaults \\
\midrule
AE & Monolithic latent vector. & MLP transition over latent + action. & \texttt{hidden-dim}=512, \texttt{embedding-dim-per-object}=5, \texttt{num-objects}=5. \\
VAE & Monolithic latent with posterior $(\mu,\log\sigma^2)$. & Same as AE for transition; stochastic latent sampling. & \texttt{--vae} flag; same transition width and action interface as AE. \\
GNN & Object-wise factored latents. & Graph message passing transition (pairwise interaction bias). & \texttt{--gnn} flag; action-conditioned transition with optional \texttt{--ignore-action}/\texttt{--copy-action}. \\
Modular & Object-wise factored latents. & Object-wise modular transition (higher-order interaction capacity). & \texttt{--modular} flag; per-object embedding and shared action interface. \\
\bottomrule
\end{tabularx}
\end{table}

\paragraph{Baseline objectives in our notation.}
Following the baseline definitions in Ke \etal~\cite{ke2021systematic}, we use the following loss settings with symbols matched to this paper:
\begin{align}
\mathcal{L}_{\mathrm{NLL}} &= \ell_{\mathrm{rec}}(o_t,\hat{o}_t)
+ \ell_{\mathrm{dyn}}(z_{t+1},\hat{z}_{t+1}).
\end{align}
In Ke \etal~\cite{ke2021systematic}, reconstruction terms are written as BCE and transition terms as MSE. In our implementation, reconstruction and transition are optimized with MSE-style regression losses.

\paragraph{Contrastive-route objective.}
For decoder-free training, we optimize encoder and transition jointly:
\begin{align}
\mathcal{L}_{\mathrm{con}} &= d\!\left(\hat{z}_{t+1}, z_{t+1}\right)
+ \max\!\left(0,\ \gamma - d\!\left(\tilde{z}_{t+1}, z_{t+1}\right)\right),
\end{align}
where $\tilde{z}_{t+1}$ is a negative latent sampled by batch shuffling and $d(\cdot,\cdot)$ is Euclidean/MSE distance in latent space. This matches the contrastive structure used in the reference protocol (positive transition matching plus hinge-separated negatives).

\begin{table}[htbp]
\centering
\caption{Baseline loss/optimization defaults in our implementation.}
\label{tab:baseline_loss_defaults}
\resizebox{\linewidth}{!}{%
\scriptsize
\begin{tabular}{@{}lll@{}}
\toprule
Item & Default value & Where used \\
\midrule
Optimizer & Adam & all baseline runs \\
Base learning rates & \texttt{slr=lr=transit-lr=finetune-lr}=5e-4 & baseline NLL and contrastive routes \\
Batch size & \texttt{1024} & \texttt{train\_baselines.py} \\
Epoch setting A & \texttt{100} & baseline NLL-type training budget \\
Epoch setting C & \texttt{100} & implementation default retained for reproducibility \\
Contrastive hinge & \texttt{1.0} (\texttt{--hinge}) & decoder-free contrastive route \\
Energy scale & \texttt{0.5} (\texttt{--sigma}) & contrastive route \\
Random seed & \texttt{42} & paired baseline vs.+CausalVAE comparison \\
\bottomrule
\end{tabular}%
}
\end{table}
\FloatBarrier

\paragraph{Three-stage schedule for +CausalVAE (not pure baselines).}
Our S1/S2/S3 schedule is used only for \emph{Baseline+CausalVAE} models: S1 optimizes factual reconstruction/prediction for the encoder-transition backbone; S2 freezes the backbone and optimizes the CausalVAE structural objective (reconstruction, KL, DAG, mask, and alignment when supervision exists); S3 freezes encoder/causal branch and fine-tunes transition dynamics for counterfactual stability using late-mixed rollouts. Pure baselines (without CausalVAE) are reported with baseline objective settings (NLL, contrastive) rather than this causal-branch schedule.

\paragraph{Intervention construction.}
Given a factual prefix $\mathbf{x}_{0:t_0}$ (and state prefix $\mathbf{s}_{0:t_0}$ when available), we apply interventions on selected targets:
\begin{equation}
do\!\left(s_i \leftarrow s_i + \Delta\right) \quad \text{or} \quad do\!\left(v_i \leftarrow v_i + \Delta\right).
\end{equation}
We evaluate both single-target and multi-target interventions. The intervention time $t_0$ is sampled from a valid prefix window to ensure sufficient history and future context. Intervention magnitudes are evaluated by small/medium/large sweeps (domain-normalized within each benchmark). Ground-truth counterfactual futures are obtained by re-simulating from the intervened state whenever simulator/state access is available.

\paragraph{Counterfactual benchmark construction details.}
We define one \emph{counterfactual query group} as $(\mathbf{x}_{0:t_0},\ \text{intervention spec},\ \mathbf{x}^{cf}_{t_0+1:t_0+H},\ \text{candidate set})$. The intervention spec includes target index (object/variable), axis or factor id when applicable, and magnitude $\Delta$. In default evaluator settings, we use up to \textbf{2000 query groups} per run (evaluator argument \texttt{max-samples=2000}). Each query uses one ground-truth counterfactual future. Negatives are sampled from non-matching episodes under the same benchmark protocol.

\begin{table}[htbp]
\centering
\caption{Detailed construction of counterfactual benchmark queries.}
\label{tab:cf_benchmark_details}
\small
\begin{tabularx}{\linewidth}{@{}p{0.19\linewidth}p{0.17\linewidth}p{0.23\linewidth}p{0.35\linewidth}@{}}
\toprule
Benchmark & Query groups (default) & Intervention variables & Notes \\
\midrule
Physics (3-body) & Up to 2000 per run. & Object id $i$, axis (x/y), magnitude $\Delta$. & Evaluator default includes \texttt{obj-idx}=0, \texttt{axis}=x, \texttt{delta}=3.0; axis/magnitude sweeps follow protocol settings. \\
2D Shapes / 3D Cubes & Up to 2000 per run. & Object-level pose/push-related factors. & Single- and multi-target variants; intervention index maps to object-slot metadata. \\
Chemistry & Up to 2000 per run. & Mechanism/factor id and magnitude $\Delta$. & Mechanism-level intervention metadata when available; otherwise use global structural protocol. \\
\bottomrule
\end{tabularx}
\end{table}

\paragraph{Retrieval protocol.}
For each query, we construct a candidate set $\{\mathbf{x}^{(j)}_{t_0+1:t_0+H}\}_{j=1}^{N}$ with one ground-truth future and $N-1$ negatives sampled from non-matching episodes under the same benchmark protocol. Candidates are scored by model likelihood/similarity. We report factual retrieval with H@1 and MRR at horizons $h\in\{1,5,10\}$ (evaluated separately per horizon), and counterfactual retrieval with CF-H@1 and CF-MRR on the counterfactual step under the same candidate-set construction. The full protocol choices are summarized in Tab.~\ref{tab:protocol}; detailed counterfactual query construction is given in Tab.~\ref{tab:cf_benchmark_details}.

\begin{table}[htbp]
\centering
\caption{Evaluation protocol summary.}
\label{tab:protocol}
\resizebox{\linewidth}{!}{%
\begin{tabular}{@{}llll@{}}
\toprule
Component & Choice & Variants & Notes \\
\midrule
Intervention time & $t_0$ & sampled over prefix window & consistent across methods \\
Targets & object / variable & single, multi & physics: pos/vel; pushing: pose/push \\
Magnitude & $\Delta$ & small/med/large & robustness check under intervention strength \\
Rollout horizon & $H$ & short/long & drift stress-test at long $H$ \\
Retrieval set & $N$, $h$ & fixed per benchmark & negatives from other episodes; $h\in\{1,5,10\}$ for factual metrics \\
Scoring & NLL / similarity & model-dependent & paired comparison protocol \\
\bottomrule
\end{tabular}
}
\end{table}
\FloatBarrier

\section{Reasonableness of Identifiability Assumptions}
\label{app:ident_assumptions}

This section explains why assumptions (I1)--(I5) used for the identifiability analysis in the main paper are reasonable for our training setup.

\noindent\textbf{(I1) Stage-2 decoupling.}
This is enforced by design: in Stage-2 we freeze the world-model backbone (encoder/transition) and optimize only the causal branch parameters. Hence the structural objective is the active objective for $(\psi,A,g_{\mathrm{align}})$.

\noindent\textbf{(I2) Latent invertibility.}
We treat invertibility as a local regularity assumption: in neighborhoods used for training/evaluation, encoder latents preserve sufficient information about the underlying state manifold. This is standard in identifiability analyses and is empirically supported when factual retrieval remains strong.

\noindent\textbf{(I3) Realizability + DAG.}
The realizability part is the usual population-level assumption that model class and optimization can represent the target mechanism. The acyclicity characterization is enforced through the differentiable DAG penalty in the main paper, which rules out cyclic solutions in the structural layer.

\noindent\textbf{(I4) Alignment anchoring + scale normalization.}
This is justified by our supervision protocol: when simulator states are available, the alignment head constrains latent coordinates to physically grounded targets, and normalization fixes scale degrees of freedom. Together they reduce permutation/scale ambiguity to a fixed coordinate frame.

\noindent\textbf{(I5) Population/global optimum.}
This is a theoretical idealization used for identifiability statements. In finite-sample SGD training, exact global optimality is not guaranteed; therefore the theorem should be interpreted as explaining the target solution structure rather than guaranteeing exact recovery in every run.

\noindent\textbf{Scope and boundary.}
The theorem claims identifiability in the anchored coordinate system induced by alignment, not under arbitrary unconstrained reparameterizations. When alignment is weak/noisy, dynamics are strongly non-stationary, or optimization is far from optimum, adjacency recovery may be only approximate.

\section{Supervision Sources and Objective Slot Selection}
\label{app:supervision_slots}

This section specifies, per benchmark family, (i) how Stage-2 supervision targets are obtained and (ii) how objective slots (the latent slots receiving intervention/alignment objectives) are determined.

\noindent\textbf{Physics (3-body gravitation).}
When simulator states are available, supervision is derived from physical state variables (\eg, position/velocity coordinates). Objective slots are assigned according to intervened object identity from simulator metadata; slot-level losses are applied to the corresponding latent slots, while non-target slots are treated as context.

\noindent\textbf{2D Shapes / 3D Cubes (block pushing).}
Supervision uses object-level annotations available from simulator or logged trajectory states (\eg, pose/push-related variables). Objective slots follow object-level intervention indices; if multiple objects are intervened, all corresponding slots are supervised jointly.

\noindent\textbf{Chemistry.}
When mechanism/factor labels are available, they are used as weak supervision targets for corresponding latent factors. Objective slots are mapped from mechanism-level intervention metadata when provided; otherwise, only globally defined structural objectives are applied.

\noindent\textbf{General rule used in this work.}
If benchmark metadata provides a direct intervention target index, we map it to the corresponding latent slot(s) and apply slot-aware objectives; if such metadata is unavailable, we do not enforce hard slot supervision and optimize only globally defined structural terms.

\section{Proof of the Identifiability Theorem (Main Text)}
\label{app:ident_proof}

\begin{proof}
By (I1), Stage-2 optimizes only the structural objective, hence identifiability of $A$ is determined solely by the structural learner. By (I2), $z_t=g(v_t)$ with invertible $g$, so optimizing over $(C_\psi,g_{\mathrm{align}})$ on inputs $z_t$ is equivalent (via reparameterization) to optimizing over $(C_\psi\!\circ g, g_{\mathrm{align}})$ on inputs $v_t$; thus we may work in the true causal coordinate domain without loss of generality. Let $(\hat\psi,\hat A,\hat g)$ and $(\psi',A',g')$ be two population-risk global minimizers, producing $\hat{\tilde z}_t=C_{\hat\psi}(z_t)$ and $\tilde z'_t=C_{\psi'}(z_t)$. By (I4), both minimizers satisfy $\hat g(\hat{\tilde z}_t)=s_t$ a.s.\ and $g'(\tilde z'_t)=s_t$ a.s. Define the (a.s.) invertible change of coordinates $T$ by $\tilde z'_t=T(\hat{\tilde z}_t)$. Then $\hat g(T(\hat{\tilde z}_t))=s_t=\hat g(\hat{\tilde z}_t)$ a.s.\ under the same normalization, so by the axis-fixing part of (I4) we have $T=I$ and hence $\tilde z'_t=\hat{\tilde z}_t$ a.s. Therefore all global minima share the same anchored latent $\tilde{z}_t$, and by (I3) the structural-consistency terms $(\mathcal{L}_{\mathrm{rec}},\mathcal{L}_{\mathrm{Mask}},\mathcal{L}_{\mathrm{KL}})$ together with acyclicity enforce that the minimizing adjacency is unique in this coordinate system, yielding $A'=\hat A=A^\star$.
\end{proof}

\FloatBarrier
\end{document}